\documentclass[11pt]{article}

\usepackage[preprint]{acl}

\usepackage{times}
\usepackage{latexsym}

\usepackage[T1]{fontenc}

\usepackage[utf8]{inputenc}

\usepackage{microtype}

\usepackage{inconsolata}

\usepackage{graphicx}

\usepackage{amsmath} 
\usepackage{amsfonts} 

\usepackage{amsthm}
\theoremstyle{plain}            
\newtheorem{lemma}{Lemma}  

\usepackage{algorithm}
\usepackage{algorithmic}

\title{Survival-Guided Length Control for Efficient Diffusion Language Models}

\author{Ivan Kobyzev\thanks{Equal contribution}\hspace{3mm}Abbas Ghaddar\footnotemark[1]\hspace{3mm}Yufei Cui \\
Huawei Noah’s Ark Lab, Montreal Research Center, Canada\\
{\normalsize\texttt{\{ivan.kobyzev,abbas.ghaddar,yufei.cui\}@huawei.com}}}

\usepackage{amssymb}
\usepackage{pifont}

\newcommand{\mightmention}[1]{}
\newcommand{\problem}[1]{\textcolor{red}{$\star$}}
\newcommand{\answer}[1]{\textcolor{blue}{$\#$}}
\newcommand{\todoreview}[1]{\textcolor{green}{$@$}}

\usepackage{subcaption}
\usepackage{rotating,csquotes}
\usepackage{xcolor}
\usepackage{multirow}
\usepackage{framed}
\usepackage{booktabs}
\usepackage{amsmath}
\usepackage{upgreek}
\usepackage{amsfonts}

\usepackage[most]{tcolorbox}

\newtcbox{\mybox}[1][]{enhanced, colframe=blue, colback=blue!15, 
	frame style={opacity=0.25}, interior style={opacity=0.25}, 
	nobeforeafter, tcbox raise base, shrink tight, extrude by=1mm, #1}

\usepackage{pgfplots}
\usepackage{array}
\usepackage{xcolor}
\usetikzlibrary{shapes.multipart}

\usepackage{subcaption}
\usepackage{rotating,csquotes}
\usepackage{xcolor}
\usepackage{multirow}
\usepackage{framed}
\usepackage{booktabs}
\usepackage{amsmath}
\usepackage{upgreek}
\usepackage{amsfonts}
\usepackage{arydshln}

\usepackage{amsmath,amssymb,mathtools}

\usepackage{enumitem}

\usepackage{float}

\usepackage{rotating}     
\usepackage{subcaption}   
\usepackage{booktabs}
\usepackage{caption}

\makeatletter
\AtBeginDocument{%
}
\makeatother

\begin{document}
\maketitle
\begin{abstract}

Diffusion language models (DLMs) generate text by iteratively denoising masked sequences, but standard decoding either fixes the sequence length or relies on ad hoc stopping rules, often leading to unnecessary denoising steps. We recast length selection as a discrete-time survival problem over the end-of-sequence token and propose a plug-in, training-free length predictor that can be added to any existing DLM. Across reasoning and code-generation benchmarks, survival-guided length decoding speeds up inference by up to $7\times$ while preserving task accuracy. We further find that predicted lengths vary widely even within the same dataset, making model performance sensitive to the chosen length.
\end{abstract}

\section{Introduction}
Masked Diffusion Language Models (DLMs) \cite{austin2021structured,shi2024simplified} generate text by iteratively denoising a masked canvas.
At each diffusion step, masked positions are updated using the model’s predictive distribution, gradually transforming an all-\texttt{[MASK]} suffix into the text.
This iterative refinement enables flexible any-order decoding and parallelism, but it also raises a core question for decoding cost: when to stop denoising and which tokens to commit at each step?

Standard decoding for masked DLMs often follows an any-order autoregressive (AOAR) pattern \cite{Ou2024YourAD}: choose a large task-agnostic maximum length $L_{\max}$, run a fixed denoising schedule, and stop only when all masks are removed or an iteration budget is reached.
While simple, this approach is frequently wasteful: many prompts require far fewer tokens than $L_{\max}$, yet the decoder still spends computation refining positions that correspond to an unnecessarily long canvas.

As a result, a major source of overhead in masked DLM inference is not the denoising rule itself, but the mismatch between a conservative global length budget and the instance-specific length needed.
In this work, we focus on length selection: for a given prompt, how many new tokens should we generate?
We show that length selection admits a natural interpretation in terms of survival analysis~\cite{survival_andersen1993counting}.

Concretely, we treat generation length as a discrete-time survival variable over the end-of-sequence token \texttt{[EOS]}.
Using the DLM’s per-position \texttt{[EOS]} probabilities from a single forward pass on a long masked canvas, we obtain a plug-in estimate of the discrete-time hazard and recover a closed-form estimate of the expected length via standard survival identities.
This yields a \textit{training-free, model-agnostic length predictor} that can be plugged into existing masked DLMs without modifying model parameters or changing the underlying denoising schedule. Empirically, survival-guided length prediction reduces unnecessary refinement 
of positions beyond the true end of the sequence
and speeds up inference without sacrificing performance relative to baselines that decode with a sufficiently large $L_{\max}$.

Extensive experiments with two large-scale DLMs, LLaDA~\cite{llada} and Dream~\cite{dream}, consistently demonstrate inference speedups of up to $7\times$ across reasoning and code-generation benchmarks, with no loss in task performance. Additionally, our analysis suggests that length prediction is indeed sample-dependent, with lengths varying significantly across samples of the same distribution.

\section{Method}

Survival analysis~\citep{survival_andersen1993counting} studies the distribution of random event times, classically in settings such as time-to-failure or time-to-death. A central object is the hazard function, which models the probability that an event occurs at time $t$ given that it has not occurred before. This viewpoint is naturally discrete and sequential: at each step we either survive to the next step or terminate. Applying this framework to DLMs, the unknown sequence length becomes a discrete event time for the end-of-sequence token, and at each candidate position we ask if the sequence has not ended yet, how likely is it to end here? 

This allows us to reuse standard survival identities to obtain a plug-in estimate of the expected length from a single pass of the DLM. We consider masked diffusion language models~\cite{austin2021structured,shi2024simplified} that generate text by iteratively denoising a masked canvas.
Let
\begin{equation}
\label{eq:canvas}
    \mathbf{x}^{(0)} = [x_{1:P}, \texttt{[MASK]},\dots,\texttt{[MASK]}]
\end{equation}
be the initial sequence consisting of a prompt of length $P$ followed by $T$ masked positions, and let $\mathbf{x}^{(s)}$ denote the sequence after $s$ denoising steps.
At each step $s$, the DLM produces logits and probabilities:
\[
\mathbf{z}^{(s)} = f_\theta\bigl(\mathbf{x}^{(s)}, t_s\bigr),
\quad
p^{(s)}_{i,v} = \text{softmax}_v z^{(s)}_{i,v},
\]
for position $i$ and vocabulary token $v$, where $t_s$ is the (discrete) diffusion time.

We treat the unknown sequence length as a discrete-time survival problem over positions in the generation span. Consider the masked input to the DLM as in Eq.~\ref{eq:canvas}, 
where the prompt occupies the positions $1,\dots,P$, and the model is tasked to generate up to $T$ new tokens in positions $P+1, \dots, P+T$. 
However, the ideal generation length is unknown. In standard practice $T$ is set to a large, task-agnostic upper bound $L_{\max}$, so that many prompts terminate long before $L_{\max}$ but the decoder still runs the full denoising schedule on all $T$ slots.
This mismatch between a fixed global budget and the ideal instance-specific length is a major source of wasted compute.

To predict the ideal generation length $L$, 
let us run one pass of the diffusion model on the initial canvas and denote the resulting logits as $\mathbf{z}^{(0)}$.
 The quality of the text decoded after one step of DLM is usually extremely low, but these logits are still informative and can be used for our task of length prediction if we interpret its per-position \texttt{[EOS]} probabilities as a length-survival signal. 

Let $t$ be the position of the logits $\mathbf{z}^{(0)}$ in the sequence, and define

\begin{equation}
\begin{aligned}
  p_t 
  &:= \mathbb{P} \big(\texttt{[EOS]} \text{ at position } t \,\big|\, \text{prompt, masks} \big) \\
  &= \mathrm{softmax}(\mathbf{z}^{(0)}_t)[\texttt{[EOS]}],
\end{aligned}
\end{equation}
for  $t = P{+}1,\dots,P{+}T.$
We model the sequence termination with a discrete-time survival process. Define the hazard at relative position $k = 1,\dots,T$ as the probability
that the sequence ends now, given that it has not ended before:
$$ h_k := \mathbb{P}(L = k \mid L \ge k, \text{prompt}). $$ 
Then assuming that the diffusion model is well-trained and fits the data distribution, we can estimate the hazard as $h_k \approx  p_{P+k}$. 

We make a mean-field-style approximation~\cite{Blei2016VariationalIA}, and treat the \texttt{[EOS]} events across positions as conditionally independent given the prompt.
Under this assumption, the survival up to step $k$ is
\begin{equation}
   S(k) \;=\; \prod_{i=1}^{k} \bigl(1 - h_i\bigr),
  \qquad S(0) = 1, 
\end{equation}
and the probability of termination exactly at $k$ is
\begin{equation}
    \pi_k \;=\; \mathbb{P}(L = k)
  \;\approx\; h_k\,S(k-1).
\end{equation}
  
Given the plug-in length distribution $\{\pi_k\}_{k=1}^T$, we can compute the truncated expected length:
\begin{align}
  \mathbb{E}[L] 
  \;=\; \sum_{k=1}^{T} k\,\pi_k
  \;=\; \sum_{k=1}^{T} S(k-1),
  \label{eq:exp-from-survival}
\end{align}
where the second equality is the standard identity that the expectation equals the sum of survivals (see proof in Appendix~\ref{app:exp-survival}).

At test time we choose a single plug-in length estimate $\hat{L}(x) = \mathbb{E}[L]$ and pass it to the DLM as the maximum number of new tokens.
Crucially, this requires no extra training or parameters. The length predictor is entirely derived from the base model’s \texttt{[EOS]} logits. See Algorithm~\ref{alg:length-prediction} in Appendix~\ref{app:algo_len}.

\section{Experiments}

\subsection{Experimental Setting}
We experiment with two powerful masked diffusion language models (DLMs) that are trained in notably different ways: LLaDA-8B-Base~\cite{llada} and Dream-v0-Base-7B~\cite{dream}. Both are implemented in PyTorch~\cite{paszke2019pytorch} on top of the Transformers library~\cite{wolf2020transformers}. Hereafter, we refer to these two models as LLaDA and Dream, respectively.

We consider a set of reasoning and code-generation benchmarks used in the original evaluations of both models, including BBH~\cite{suzgun2023challenging}, GSM8K~\cite{cobbe2021training}, MATH~\cite{hendrycks2measuring}, HumanEval~\cite{chen2021evaluating}, and MBPP~\cite{austin2021program}. 
We follow the standard LM Evaluation Harness~\cite{eval-harness} setup, reporting 3-shot accuracy on BBH, strict match 5-shot accuracy on GSM8K, 4-shot accuracy on MATH, pass@1 on HumanEval (0-shot)\footnote{For Dream on HumanEval we apply the post-processing following the authors' public implementation at https://github.com/DreamLM/Dream.}, and pass@1 on MBPP (3-shot).
We use each model's custom evaluation code and per-benchmark hyperparameters.\footnote{Authors of both LLaDA and Dream use a suffix-masked sequence length of $L_{\max}\!=\!1024$ for all benchmarks.} All experiments are conducted on a single modern compute accelerator, and we report results with a batch size of 1.

\subsection{Main Results}

\begin{table}[!ht]
    \begin{center}
    \resizebox{\columnwidth}{!}{
        \begin{tabular}{lcc|cc}
            \toprule
            \multirow{2}{*}{\textbf{Task}} & \multicolumn{2}{c}{\textbf{LLaDA}} & \multicolumn{2}{c}{\textbf{Dream}} \\
            \cmidrule(lr){2-3} \cmidrule(lr){4-5}
             & \textbf{w/o} & \textbf{w/} & \textbf{w/o} & \textbf{w/} \\
            \midrule
            BBH       & 73   & 14   ($5.2\times$) & 56.3 & 10.5 ($5.4\times$) \\
            GSM8K     & 91   & 16   ($5.2\times$) & 81.4 & 12.3 ($6.6\times$) \\
            MATH      & 76   & 23   ($3.3\times$) & 64.5 & 16.1 ($4.0\times$) \\
            HumEval & 54   & 9    ($6.0\times$) & 53.0 & 16.6 ($3.2\times$) \\
            MBPP      & 80   & 26   ($3.1\times$) & 64.8 & 16.6 ($3.9\times$) \\
            \bottomrule
        \end{tabular}
        }
    \end{center}
    \caption{Decoding speed (seconds per sample) for baseline (\textbf{w/o}) vs.\ length prediction (\textbf{w/}). Parentheses report speedup relative to the baseline.}
    \label{tab:main_res}
\end{table}
\vspace{-2mm}

Table~\ref{tab:main_res} shows the decoding speed (in terms of seconds per example) for LLaDA and Dream, when using standard AOAR decoding with a fixed maximum length (\textbf{w/o}) against one that is equipped with our survival-guided length predictor (\textbf{w/}). We observe that, for both models, using the predicted length yields substantial decoding-speed improvements across all benchmarks, with speedups ranging from $3.2\times$ to $6.6\times$. In addition, we find that on most benchmarks (except HumanEval), the speedup is in a similar range for both models despite their structural differences.

Table~\ref{tab:main_perf} shows the per-task performances, as well as standard deviation, of models with and without our length predictor. As one can see, the efficiency gains reported in  \autoref{tab:main_res} do not come at the expense of task performance. Across all benchmarks, the differences between \textbf{w/o} and \textbf{w/} remain within the reported standard deviations, indicating no statistically meaningful change.

\begin{table}[!ht]
    \centering
    \setlength{\tabcolsep}{3.5pt} 
    \renewcommand{\arraystretch}{0.95} 
    \resizebox{\columnwidth}{!}{
    \begin{tabular}{lcc|cc}
        \toprule
        \multirow{2}{*}{\textbf{Task}} & \multicolumn{2}{c}{\textbf{LLaDA}} & \multicolumn{2}{c}{\textbf{Dream}} \\
        \cmidrule(lr){2-3} \cmidrule(lr){4-5}
     & \textbf{w/o} & \textbf{w/} & \textbf{w/o} & \textbf{w/} \\
        \midrule
        BBH            & $49.5\!\pm\!0.6$ & $51.8\!\pm\!0.6$ & $63.7\!\pm\!0.5$ & $63.8\!\pm\!0.5$ \\
        GSM8K & $69.9\!\pm\!1.3$ & $70.0\!\pm\!1.3$ & $74.5\!\pm\!1.2$ & $74.6\!\pm\!1.2$ \\
        MATH           & $31.6\!\pm\!0.6$ & $31.0\!\pm\!0.6$ & $39.6\!\pm\!0.7$ & $39.5\!\pm\!0.7$ \\
        HumEval      & $32.3\!\pm\!3.7$ & $33.5\!\pm\!3.7$ & $58.0\!\pm\!3.7$ & $59.0\!\pm\!3.7$ \\
        MBPP           & $40.0\!\pm\!2.0$ & $40.2\!\pm\!2.1$ & $58.0\!\pm\!2.0$ & $58.0\!\pm\!2.0$ \\
        \bottomrule
    \end{tabular}}
    \caption{Task performance and standard deviation for baseline (\textbf{w/o}) vs.\ length prediction (\textbf{w/}).}
    \label{tab:main_perf}
\end{table}
\vspace{-3mm}

This indicates that our survival-guided length predictor reliably trims redundant tail steps while leaving the quality of the generated sequence essentially unchanged. A notable aspect of these results is that the procedure is completely model-agnostic and training-free. It consists of a single forward pass on a long masked canvas, interpretation of per-position \texttt{[EOS]} probabilities as a discrete-time hazard, and computing a closed-form expected length from the survival curve. Despite architectural and training differences, this plug-in estimator consistently delivers $3$–$7\times$ speedups with no measurable loss in accuracy, supporting the view that survival-guided length control is a robust primitive for efficient decoding in diffusion language models.

\subsection{Fixed Mean-Length Ablation}

\begin{table}[!ht]
    \centering
    \setlength{\tabcolsep}{3.5pt}
    \renewcommand{\arraystretch}{0.95}
    \resizebox{\columnwidth}{!}{
    \begin{tabular}{lcc|cc}
        \toprule
         \multirow{2}{*}{\textbf{Task}} & \multicolumn{2}{c}{\textbf{LLaDA}} & \multicolumn{2}{c}{\textbf{Dream}} \\
        \cmidrule(lr){2-3} \cmidrule(lr){4-5}
        & $\bar{L}$ & $\Delta$ & $\bar{L}$ & $\Delta$ \\
        \midrule
        BBH     & $537\!\pm\!204$ & $-3.2\!\pm\!0.7$ & $482\!\pm\!176$ & $-2.7\!\pm\!0.6$ \\
        GSM8K   & $278\!\pm\!29$  & $-2.8\!\pm\!1.8$  & $270\!\pm\!52$  & $-2.6\!\pm\!1.8$ \\
        MATH    & $498\!\pm\!64$  & $-0.6\!\pm\!0.8$  & $390\!\pm\!62$  & $-0.7\!\pm\!0.9$ \\
        HumEval & $382\!\pm\!57$  & $-1.8\!\pm\!5.2$  & $396\!\pm\!107$ & $-2.3\!\pm\!5.2$ \\
        MBPP    & $495\!\pm\!44$  & $0.0\!\pm\!3.0$   & $378\!\pm\!95$  & $-2.0\!\pm\!3.0$ \\
        \bottomrule
    \end{tabular}
    }
    \caption{We report the per-task mean predicted length ($\bar{L}$) along with its standard deviation, as well as the performance change ($\Delta$) and its standard deviation 
    when using $\bar{L}$ as a fixed horizon, compared to our per-sample survival-guided length selection.}
    
    \label{tab:fixed_length_ablation}
\end{table}
\vspace{-2mm}

To test whether our gains come simply from choosing a shorter global budget, rather than from instance-wise adaptation, we run an ablation where each model uses a \emph{single} fixed generation length equal to the dataset-wide mean predicted length $\bar{L}$ from our method.
Table~\ref{tab:fixed_length_ablation} reports the mean predicted length ($\bar{L}$) and its standard deviation for each model and benchmark. It also shows the performance change ($\Delta$) when replacing our per-sample length prediction with a fixed horizon set to $\bar{L}$, rather than using the maximum length $L_{\max}$. We observe that the fixed-length policy based on the mean predicted length generally underperforms survival-guided decoding across most settings, with a few exceptions (e.g., LLaDA on MBPP). We attribute this to substantial per-sample variation in required decoding length within each dataset, which exists even in carefully curated standard benchmarks. This observation suggests that per-sample length prediction is important for maintaining performance, as both models appear sensitive to the chosen decoding horizon.\footnote{More results and ablations are presented in Appendix~\ref{app:additional-results}.}

\section{Related Work}

\paragraph{Diffusion language models.}
Discrete denoising diffusion models for text were introduced in work on structured diffusion in discrete state spaces~\citep{austin2021structured} and later specialized into diffusion-style language models (DLMs) such as LLaDA and Dream, which iteratively unmask tokens on a fixed canvas~\citep{llada,dream}. 
These models trade the strictly left-to-right factorization of autoregressive LMs for parallel refinement and flexible token orders, but typical decoders still assume a conservative, fixed maximum length $T$ and a fixed number of refinement steps. 
Recent work has focused mainly on improving generation quality and scaling to larger models, with less attention to length prediction and training-free decoding policies for efficiency.

\paragraph{Length control for diffusion LMs.}

Closest to our length predictor is DAEDAL~\citep{li2025beyondfixed}, which also targets the fixed-length limitation of DLMs.
DAEDAL starts from a short canvas and repeatedly reruns the model with expanded lengths, using the average \texttt{[EOS]} confidence in a trailing window and a hand-tuned threshold to decide when the current length is sufficient.
Thus length is recovered by an iterative search over candidate horizons and requires several extra forward passes beyond the main decoding trajectory.
In contrast, our method runs the DLM once on a generously long canvas, interprets per-position \texttt{[EOS]} probabilities as a discrete-time hazard, and uses a standard survival identity to obtain a closed-form expected length, with no length loop or per-task thresholds.

A different line of work, exemplified by Prophet~\citep{li2025prophet}, studies ``early answer convergence'' in DLMs: the model often internally identifies the correct answer well before the last refinement step.
Prophet uses the gap between the top-2 candidates to decide when to commit all remaining tokens in one shot.
This addresses when to stop refinement for a fixed-length canvas, not how long the sequence itself should be, and it is based on confidence gaps over content tokens rather than \texttt{[EOS]} hazards.
Our length estimator is complementary: it predicts the output length before decoding, while leaving the subsequent refinement schedule (including Prophet-style early stops) unchanged.

\paragraph{Sampling strategies and unmasking schedules.}
There is a line of work to accelerate the decoding of DLMs with efficient unmasking schedulers. 
Standard sampling for masked diffusion models follows the MaskGIT paradigm~\citep{chang2022maskgit}, where at each step the model updates a top-$k$ or top-$p$ subset of masked positions by confidence. 
Adaptive schedulers such as EB-Sampler~\citep{benhamu2025accelerated} and SlowFast Sampling~\citep{wei2025slowfast} dynamically vary which tokens to refine based on entropy, convergence, or span structure, while other works focus on architectural and caching improvements~\citep{liu2025dllmcache,wu2025fastdllm}. All of these methods operate on a fixed-length canvas and are orthogonal to our method; in principle, they could be combined with our length predictor to further improve the decoding efficiency.

\section{Conclusion and Future Work}

We propose a training-free, survival-guided length predictor for diffusion language models that estimates the expected output length from a single forward pass. Across LLaDA and Dream, this plug-in length control delivers $3$--$7\times$ decoding speedups on reasoning and code benchmarks without statistically meaningful changes in accuracy. 
Beyond length control, the same survival perspective suggests a broader family of decoding rules for diffusion LMs, for example by viewing token-level commitment and refinement decisions through a survival or risk-allocation lens. Developing such token-level survival models and combining them with our length predictor in a unified, learnable framework is an interesting direction for future work on efficient and reliable diffusion decoding.

\section*{Limitations}
Like most efficient decoding methods for Diffusion Language Models, our approach is limited to a particular setting and several extensions remain open.
First, we evaluate survival-guided length control on two large-scale masked DLMs (LLaDA-8B-Base and Dream-7B-Base) and a standard set of reasoning and code-generation benchmarks (BBH, GSM8K, MATH, HumanEval, MBPP). This follows common evaluation practice in recent Diffusion LM works, but it does not cover settings such as extremely long-context generation, or multi-turn dialogue, so efficiency in those regimes remains to be verified. 
Second, our objective is strictly computational efficiency at inference time under a fixed model and decoding rule. We do not modify training, nor attempt to optimize for other properties of the outputs (e.g., calibration, robustness, or bias). Understanding how survival-guided decoding interacts with these broader aspects of model behavior is an interesting direction for future work.

\section*{Acknowledgments}
We thank the anonymous reviewers for their insightful comments.

\bibliography{custom}

\clearpage       
\appendix

\section{Expectation as a sum of survivals}
\label{app:exp-survival}

Let $L$ be an integer-valued random variable taking values in $\{1,\dots,T\}$ with
\[
\begin{aligned}
\pi_k &= \mathbb{P}(L = k), \qquad \\
S(k)  &= \mathbb{P}(L > k), \quad k = 0,\dots,T.
\end{aligned}
\]

Recall that in our construction $S(k) = \prod_{i=1}^{k} (1 - h_i)$ and hence
$S(k-1) = \mathbb{P}(L \ge k)$.

\begin{lemma}
    \begin{equation*}
     \mathbb{E}[L]
  \;=\; \sum_{k=1}^{T} k \,\pi_k
  \;=\; \sum_{k=1}^{T} S(k-1).
    \end{equation*}
\end{lemma}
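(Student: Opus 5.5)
The plan is to prove the tail-sum identity by writing each value $k$ as a count of indicators and then exchanging the order of a finite double sum. The first equality, $\mathbb{E}[L]=\sum_{k=1}^T k\,\pi_k$, is just the definition of expectation for a random variable supported on $\{1,\dots,T\}$, so all the work is in the second equality.

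For the second equality, write $k=\sum_{j=1}^{k} 1$. Then
\[
\sum_{k=1}^{T} k\,\pi_k=\sum_{k=1}^{T}\sum_{j=1}^{k}\pi_k=\sum_{j=1}^{T}\sum_{k=j}^{T}\pi_k .
\]
The swap is legitimate because the sum is finite and runs over the triangle $1\le j\le k\le T$. The inner sum is $\sum_{k=j}^{T}\pi_k=\mathbb{P}(j\le L\le T)=\mathbb{P}(L\ge j)$, since $L\le T$ almost surely. By the definition $S(j-1)=\mathbb{P}(L>j-1)$, and because $L$ is integer-valued, $\mathbb{P}(L>j-1)=\mathbb{P}(L\ge j)$. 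Substituting gives $\sum_{j=1}^{T}S(j-1)$. Equivalently, one can take expectations of the pointwise identity $L=\sum_{j=1}^{T}\mathbf{1}\{L\ge j\}$ and use linearity.

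The only delicate step is reconciling the appendix's definition $S(k)=\mathbb{P}(L>k)$ with the hazard-product construction $S(k)=\prod_{i\le k}(1-h_i)$ from the main text, so that the lemma really applies to Eq.~\eqref{eq:exp-from-survival}. I would check this by induction on $k$. The base case is $S(0)=1=\mathbb{P}(L\ge 1)$. For the inductive step,
\[
\mathbb{P}(L>k)=\mathbb{P}(L\ge k)-\mathbb{P}(L=k)=S(k-1)(1-h_k),
\]
using $\pi_k=h_kS(k-1)$.

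One caveat concerns the plug-in hazards. If they do not satisfy $h_T=1$, the $\pi_k$ need not sum to one, so $L$ has leftover mass $S(T)$ beyond $T$. The cleanest fix is to read the lemma as stated, with $L$ taking values in $\{1,\dots,T\}$, which is equivalent to forcing $h_T=1$. Otherwise one can rerun the computation above with $\sum_{k=j}^{T}\pi_k=S(j-1)-S(T)$, which gives
\[
\sum_{k=1}^{T}k\,\pi_k=\sum_{k=1}^{T}S(k-1)-T\,S(T).
\]
This shows the truncated identity holds exactly precisely when the residual mass $S(T)$ vanishes. I would mention this to clarify what "truncated expected length" means in the paper.
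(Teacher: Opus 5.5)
Your proof is correct and follows the paper's argument essentially step for step: write $k=\sum_{j=1}^{k}1$, swap the finite double sum over $1\le j\le k\le T$, and identify the inner tail sum with $\mathbb{P}(L\ge j)=S(j-1)$. Your caveat goes beyond the paper, which simply assumes $L$ is supported on $\{1,\dots,T\}$, and it is also correct: when $h_T\neq 1$ one gets $\sum_{k}k\,\pi_k=\sum_{k}S(k-1)-T\,S(T)$, so the paper's $\sum_k S(k-1)$ is really $\mathbb{E}[\min(L,T)]$ under the untruncated plug-in law.
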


\begin{proof}
By definition of mathematical expectation,
\[
  \mathbb{E}[L]
  = \sum_{k=1}^{T} k \,\pi_k.
\]
Then we can write an integer $k$ as a sum of $k$ ones:
\[
  k = \sum_{j=1}^{k} 1,
\]
and substitute into the expectation:
\[
  \mathbb{E}[L]
  = \sum_{k=1}^{T} \left( \sum_{j=1}^{k} 1 \right) \pi_k.
\]
After swapping the order of summation we get:
\[
  \mathbb{E}[L]
  = \sum_{j=1}^{T} \sum_{k=j}^{T} \pi_k.
\]
The inner sum is the tail probability
\[
  \sum_{k=j}^{T} \pi_k
  = \mathbb{P}(L \ge j) =\mathbb{P}(L > j-1) 
  = S(j-1).
\]
Therefore
\[
  \mathbb{E}[L]
  = \sum_{j=1}^{T} S(j-1),
\]
which is exactly the claim of the Lemma.
\end{proof}

\section{Algorithm for length prediction plug-in}
\label{app:algo_len}

\begin{algorithm*}[t]
\caption{Survival-guided length prediction for diffusion LMs}
\label{alg:length-prediction}
\begin{algorithmic}[1]
\REQUIRE Prompt tokens $\mathbf{x}_{1:P}$, maximal horizon $T$, EOS token id
\ENSURE Predicted number of new tokens $\hat{L} \in \{1,\dots,T\}$

\STATE \textbf{Input canvas.} Form the initial canvas
\[
  \mathbf{x}^{(0)} = [\mathbf{x}_{1:P}, \texttt{[MASK]},\dots,\texttt{[MASK]}]
\]
of total length $P + T$ (prompt + $T$ masked positions).

\STATE \textbf{Single forward pass.} Run the diffusion LM once on $x^{(0)}$ at the initial diffusion time to obtain logits
\[
  \mathbf{z}^{(0)}_t, \qquad t = 1,\dots,P+T.
\]

\STATE \textbf{EOS probabilities.} For each candidate generation position $t = P{+}1,\dots,P{+}T$, compute
\[
  p_t \gets \mathrm{softmax}(\mathbf{z}^{(0)}_t)[\texttt{[EOS]}].
\]

\STATE \textbf{Plug-in hazards.} For relative index $k = 1,\dots,T$, set
\[
  h_k \gets p_{P+k}
\]
as the plug-in estimate of the hazard $P(L = k \mid L \ge k, \text{prompt})$.

\STATE \textbf{Survival curve.} Initialize $S(0) \gets 1$. For $k = 1,\dots,T$:
\[
  S(k) \gets S(k-1)\,\bigl(1 - h_k\bigr).
\]

\STATE \textbf{Truncated expectation.}
Compute
\[
  \hat{L}_\text{cont}
  \;\gets\;
  \sum_{k=1}^{T} S(k-1) .
\]

\STATE \textbf{Post-processing.} Clip and round to an integer horizon:
\[
  \hat{L}_\text{cont} \gets \min\bigl(\max(\hat{L}_\text{cont}, 1), T\bigr), \qquad
  \hat{L} \gets \big\lceil \hat{L}_\text{cont} \big\rceil.
\]
\RETURN $\hat{L}$
\end{algorithmic}
\end{algorithm*}

\clearpage

\section{Additional Results}
\label{app:additional-results}

\subsection{Comparison with DAEDAL}
\label{app:daedal-instruct}

Conceptually, DAEDAL also addresses length, but it relies on an iterative length-expansion procedure with a tuned stopping threshold, and its stopping criterion depends on how reliably \texttt{[EOS]} is produced under the model’s decoding regime. In our experiments on DLMs-Base, \texttt{[EOS]} is often not emitted as an argmax early enough to support such thresholding reliably, whereas our estimator uses soft \texttt{[EOS]} probabilities and therefore applies uniformly to both base and instruction-tuned variants. This is a conceptual advantage of our approach over DAEDAL, which is only applicable to instruction-tuned models. Nevertheless, we evaluate against DAEDAL on instruction-tuned variants of LLaDA and Dream. Tables~\ref{tab:inst-speed-daedal} and~\ref{tab:inst-acc-daedal} report decoding time and task accuracy, respectively.

\begin{table}[!th]
\centering
\footnotesize
\setlength{\tabcolsep}{4pt}
\begin{tabular}{lccc}
\toprule
\textbf{Task} & \textbf{w/o} & \textbf{DAEDAL} & \textbf{Ours} \\
\midrule
\multicolumn{4}{c}{\textbf{LLaDA (Instruct)}} \\
\midrule
GSM8K     & 89 (1.0$\times$) & 28 (3.2$\times$) & 14 (6.4$\times$) \\
MATH      & 75 (1.0$\times$) & 41 (1.8$\times$) & 20 (3.8$\times$) \\
HumanEval & 53 (1.0$\times$) & 22 (2.4$\times$) & 8 (6.6$\times$) \\
MBPP      & 78 (1.0$\times$) & 68 (1.1$\times$) & 23 (3.4$\times$) \\
\midrule
\multicolumn{4}{c}{\textbf{Dream (Instruct)}} \\
\midrule
GSM8K     & 80 (1.0$\times$) & 22 (3.6$\times$) & 11 (7.3$\times$) \\
MATH      & 64 (1.0$\times$) & 38 (1.7$\times$) & 15 (4.3$\times$) \\
HumanEval & 53 (1.0$\times$) & 20 (2.7$\times$) & 14 (3.8$\times$) \\
MBPP      & 63 (1.0$\times$) & 54 (1.2$\times$) & 15 (4.2$\times$) \\
\bottomrule
\end{tabular}
\caption{Decoding speed (seconds per sample) on instruction-tuned models for DAEDAL and our length prediction method. Parentheses report speedup relative to the baseline decoder without (w/o) length prediction.}
\label{tab:inst-speed-daedal}
\end{table}

\begin{table}[!th]
\centering
\footnotesize
\setlength{\tabcolsep}{4pt}
\begin{tabular}{lccc}
\toprule
\textbf{Task} & \textbf{w/o} & \textbf{DAEDAL} & \textbf{Ours} \\
\midrule
\multicolumn{4}{c}{\textbf{LLaDA (Instruct)}} \\
\midrule
GSM8K     & 78.6 $\pm$ 1.4 & 79.2 $\pm$ 1.3 & 79.5 $\pm$ 1.5 \\
MATH      & 26.6 $\pm$ 0.7 & 26.9 $\pm$ 0.8 & 27.1 $\pm$ 0.7 \\
HumanEval & 47.6 $\pm$ 3.6 & 48.3 $\pm$ 3.5 & 48.2 $\pm$ 3.7 \\
MBPP      & 34.2 $\pm$ 2.1 & 34.8 $\pm$ 2.2 & 35.5 $\pm$ 2.0 \\
\midrule
\multicolumn{4}{c}{\textbf{Dream (Instruct)}} \\
\midrule
GSM8K     & 81.4 $\pm$ 1.3 & 81.5 $\pm$ 1.4 & 82.1 $\pm$ 1.2 \\
MATH      & 39.2 $\pm$ 0.8 & 39.7 $\pm$ 0.9 & 40.1 $\pm$ 0.8 \\
HumanEval & 55.5 $\pm$ 3.9 & 56.1 $\pm$ 3.8 & 56.6 $\pm$ 4.0 \\
MBPP      & 58.8 $\pm$ 1.9 & 59.5 $\pm$ 2.0 & 59.1 $\pm$ 1.8 \\
\bottomrule
\end{tabular}
\caption{Task performance on instruction-tuned models of baseline without length prediction compared with both DAEDAL and our survival-guided length selection method.}
\label{tab:inst-acc-daedal}
\end{table}

First, both methods preserve task performance relative to the corresponding baseline decoder. Second, our method achieves larger end-to-end speedups. This difference is expected from the structure of the methods: DAEDAL performs an iterative search over candidate lengths and depends on a thresholded stopping rule, whereas our estimator uses a single forward pass on a conservative canvas and converts the resulting per-position \texttt{[EOS]} probabilities into a closed-form expected length.

\subsection{Predictor Step Ablation}
\label{app:Predictor Step Ablation}

To further probe the stability of the length signal, we recompute the expected length from later denoising steps $s \in \{2,5,7,10\}$ and compare it to the estimate from the initial forward pass using LLaDa-Base model.

\begin{table}[!th]
\centering
\small
\setlength{\tabcolsep}{4pt}
\begin{tabular}{lcccc}
\toprule
\textbf{Task} & \textbf{Step 2} & \textbf{Step 5} & \textbf{Step 7} & \textbf{Step 10} \\
\midrule
BBH       & $0.5 \pm 2.3$ & $8.2 \pm 7.6$ & $7.2 \pm 8.6$ & $8.4 \pm 9.5$ \\
GSM8K     & $3.8 \pm 4.2$ & $4.1 \pm 5.6$ & $4.2 \pm 5.8$ & $4.2 \pm 6.3$ \\
MATH      & $-1.9 \pm 6.3$ & $7.0 \pm 6.7$ & $7.4 \pm 7.8$ & $8.0 \pm 9.3$ \\
HumanEval & $2.7 \pm 3.1$ & $4.5 \pm 8.3$ & $9.4 \pm 7.9$ & $6.9 \pm 7.9$ \\
MBPP      & $-0.8 \pm 1.2$ & $4.0 \pm 6.2$ & $8.2 \pm 8.1$ & $8.5 \pm 9.2$ \\
\bottomrule
\end{tabular}
\caption{Absolute change in predicted length $\delta \hat{L}^{(s)} = \hat{L}^{(s)} - \hat{L}^{(0)}$ when the estimator is recomputed from later denoising steps for LLaDA.}
\label{tab:length-drift}
\end{table}

Table~\ref{tab:length-drift} shows that the implied termination signal does not drift substantially during early decoding. These results support the use of a single-pass estimator and suggest that the initial \texttt{[EOS]} probabilities already contain most of the information needed for practical length selection.

\end{document}